\theoremstyle{plain}
\DeclareMathOperator*{\argmax}{argmax}
\newtheorem{theorem}{Theorem}
\newtheorem{definition}{Definition}
\newtheorem{proposition}{Proposition}
\newtheorem{fact}{Fact}
\newtheorem{observation}{Observation}
\newcommand{\remove}[1]{}
\title{The Sample Complexity of Distribution-Free Parity Learning in the Robust Shuffle Model}
\author{Kobbi Nissim\thanks{Work K.~N.\ was supported by NSF grant No.~1565387 TWC: Large: Collaborative: Computing Over Distributed Sensitive Data and by a gift to Georgetown University.} \quad Chao Yan \\
Dept.\ of Computer Science \\
Georgetown University \\ 
{\tt \{kobbi.nissim|cy399\}@georgetown.edu}
}
\date{\today}
\begin{document}

\maketitle

\begin{abstract}
We provide a lowerbound on the sample complexity of distribution-free parity learning in the realizable case in the shuffle model of differential privacy. Namely, we show that the sample complexity of learning $d$-bit parity functions is $\Omega(2^{d/2})$. Our result extends a recent similar lowerbound on the sample complexity of private {\em agnostic} learning of parity functions in the shuffle model by Cheu and Ullman~\cite{CheuUllman20}. We also sketch a simple shuffle model protocol demonstrating that our results are tight up to $\mbox{poly}(d)$ factors.
\end{abstract}

\section{Introduction}

The shuffle model of differential privacy~\cite{BittauEMMRLRKTS17, ErlingssonFMRTT19,CheuSUZZ19} has received significant attention from researchers in the last few years. 
In this model, agents communicate with an untrusted analyzer via a trusted intermediary -- a communication channel which shuffles all messages, hence potentially disassociating messages and their senders.
Much of the recent interest in the shuffle model focuses on one-round differentially private protocols. 
This interest is motivated, in part, by the potential to improve significantly over what is achievable in the local model of differential privacy~\cite{KLNRS08,BeimelNO08,ChanSS12,DuchiJW13}. Indeed, for functionalities such as bit addition, real addition, and histogram computation shuffle model protocols provide accuracy which is comparable to that achievable with a trusted curator~\cite{CheuSUZZ19, BalleBGN19, GhaziGKPV19, BalleBGN19a, GPVS19, GhaziMPV20, BalleBGN20, BC20}. 

Recent works obtain lowerbounds on the sample complexity of one-round robust shuffle model differentially private protocols by establishing an connection to pan-privacy~\cite{BalcerCJM20,CheuUllman20}. Robust shuffle model protocols are those where differential privacy is guaranteed when a large enough fraction of agents participate honestly. In the pan-privacy model~\cite{DworkNPRY10}, individual information arrives in an online fashion to be processed by a curator. Privacy, however, is required to be preserved in presence of a storage breach: as the input stream is processed by a curator, an attacker chooses a point in time in which it obtains access to observes the curator's internal state. 
Initiating this direction of research, Balcer, Cheu, Joseph, and Mao~\cite{BalcerCJM20} provided reductions from pan-privacy to robust shuffle model in which a (robust) shuffle model protocol for a task is used as the main building block in the construction of a pan-private algorithm for the same or a related task.
This allowed them to apply lowerbounds from pan-privacy to obtain lowerbounds on (robust) shuffle model protocols for tasks such as 
histograms, uniformity testing, and counting distinct elements. 
A recent work of Cheu and Ullman~\cite{CheuUllman20} extended this proof paradigm by introducing a class of tasks which are hard for pan-privacy. This resulted in new lowerbounds on the sample complexity of statistical estimation and learning tasks, including the learning of parity functions, where the latter is of specific interest because of the equivalence between the local model of differential privacy and the statistical queries model~\cite{KLNRS08}, and the impossibility of learning parity functions in the statistical queries model~\cite{Kearns93}.\footnote{Considering the realizable setting with underlying uniform distribution on samples, the equivalence implies that no local model protocol exists for parity learning with polynomial round complexity and polynomial sample complexity.}

\paragraph{Our results.} Our main result is an exponential lowerbound on the sample complexity of distribution-free parity learning in the shuffle model.
Our proof has two main components. We first show how to construct a pan-private parity learner in the uniform distribution setting given a robust shuffle model distribution-free parity learner. Second, we show how to transform such a pan-private learner into a pan-private protocol for a  distinguishing task requiring an exponential number of samples. We get:

\medskip

\noindent{\bf Theorem~\ref{thm:LowerBoundParity}} (informal). {\em  For every distribution-free parity learning algorithm in the shuffle model the sample complexity is $n=\Omega(2^{d/2})$.}

\medskip

This result is complemented by a robust shuffle model protocol for distribution free parity parity with sample complexity $O(d2^{d/2})$.

\paragraph{Other related work.}
Also relevant to our work are the results of Chen, Ghazi, Kumar, and Manurangsi~\cite{ChenG0M21}. They  prove that the sample complexity of parity learning in the shuffle model is $\Omega(2^{d/(k+1)})$. Comparing with our results, their lowerbound depends on the message complexity of the protocol, whereas our bound holds regardless of the message complexity. On the other hand, our lowerbound holds for robust shuffle model protocols, whereas the result of Chen et al.\ does not require robustness.

\section{Preliminaries}

\subsection{Differential privacy, pan-privacy, and the shuffle model}

Let $X$ be a data domain. We say that two datasets $x,x'\in X^n$ are {\em neighboring} if they differ on exactly one entry, i.e., $|\{i:x_i\not=x'_i\}=1|$.
\begin{definition}[differential privacy~\cite{DMNS06}] A randomized mechanism $M:X^n\rightarrow Y$ preserves $(\varepsilon,\delta)$-differential privacy if for all neighboring $x,x'\in X^n$, and for all events $T\subseteq Y$,
$$\Pr[M(x)\in T]\leq e^\varepsilon \cdot \Pr[M(x')\in T]+\delta,$$
where the probability is over the randomness of the mechanism $M$.
\end{definition}

\begin{definition}[pan-privacy~\cite{DworkNPRY10}]
For an online mechanism $M:X^n\rightarrow Y$, let $S_{\leq t}(x)$ represent the internal state of $M(x)$ after receiving the $t$ first inputs $x_1,\ldots,x_t$. We say $M$ is $(\varepsilon, \delta)$-pan-private if for every two neighbouring datasets $x,x'\in X^n$, for every event $T\subseteq Y$, and for every $1\leq t\leq n$,
$$
\Pr[(S_{\leq t}(x), M(x)) \in T] \leq e^\varepsilon \Pr[(S_{\leq t}(x'), M(x')) \in T] +\delta,
$$
where the probability is over the randomness of the online mechanism $M$.
\end{definition}

A one round shuffle model mechanism $M:X^n\rightarrow Y$, as introduced in~\cite{CheuSUZZ19}, consists of three types of algorithms: (i) local randomizers $R_1,\ldots,R_n$ each maps an input $x_i\in X$ to a collection of messages from an arbitrary message domain; (ii) A shuffle $S$ receives a collection of messages and outputs them in a random order; and (iii) an analyzer algorithm $A$ maps a collection of messages  random permutation to an outcome in $Y$. Malicious users may avoid sending their messages to the shuffle. We denote such users by $\bot$.
The output of $M=((R_1,\ldots,R_n),S,A)$ is hence $A(S(\hat R_1(x_1),\ldots,\hat R_n(x_n)))$ where $\hat R_i = R_i$ for honest users and $\hat R_i = \bot$ for malicious users.
\begin{definition}[robust one-round shuffle model~\cite{BalcerCJM20}]
A one round shuffle model mechanism $M=((R_1,\ldots,R_n),S,A)$ is $\gamma$-robust and $(\varepsilon,\delta)$-differentially private if when at least $\gamma n$ of the parties are honest for all neighboring $x,x'\in X^n$ and for all events $T\subseteq Y$, 
$$\Pr[S(\hat R_1(x_1),\ldots,\hat R_n(x_n))\in T] \leq e^\varepsilon\cdot \Pr[S(\hat R_1(x'_1),\ldots,\hat R_n(x'_n))\in T] +\delta,$$
where the probability is over the randomness of $(\hat R_1,\ldots,\hat R_n)$ and the shuffle $S$.
\end{definition}

\remove{
\begin{theorem}[Advanced composition]
For every $\varepsilon,\delta,\delta'\geq0$, if mechanism $M_i$ for $i\in[k]$ is $(\varepsilon, \delta)$-differentially private, then $M'=(M_1,M_2,\ldots,M_k)$ is $(\varepsilon',k\delta+\delta')$-differentially private, where
$$
\varepsilon'=\sqrt{2k\ln(1/\delta')}\cdot\varepsilon+k\varepsilon(e^\varepsilon-1).
$$
Specifically, to ensure $(\varepsilon',k\delta+\delta')$-differential privacy, it suffices to set 
$$
\varepsilon=\frac{\varepsilon'}{2\sqrt{2k\ln(1/\delta')}}.
$$
\end{theorem}
}

\subsection{Private learning}

A concept class $C$ is a collection of predicates over the data domain $c_r:X \rightarrow\{\pm 1\}$. 
Let $P\in\Delta(X)$ be a probability distribution over the data domain $X$ and let $h:X\rightarrow\{\pm 1\}$. The generalization error of hypothesis $h$ with respect to the concept $c$ is  $\mbox{error}_P(c,h)=\Pr_{x\sim P}[h(x)\not=c(x)]$.
\begin{definition}[PAC learning~\cite{Valiant84}]\label{def:PAC}
A concept class $C$ is $(\alpha,\beta,m)$ PAC learnable if there exists an algorithm $L$ such that for all distributions  $P\in\Delta(X)$ and all concepts $c\in C$,
$$\Pr\left[\{x_i\}_{i=1}^m\sim P ; h\leftarrow L\Big(\{(x_i,c(x_i)\}_{i=1}^m\Big); \mbox{error}_P(c,h)\leq \alpha\right]\geq 1-\beta,$$
where the probability is over the choice of $x_1,\ldots,x_m$ i.i.d.\ from $P$ and the randomness of $L$.
\end{definition}

Note that Definition~\ref{def:PAC} is of an {\em improper} learner as the hypothesis $h$ need not come from the concept class $C$.

\begin{definition}[weight $k$ parity]
Let $\mbox{PARITY}_{d,k} = \{c_{r,b}\}_{r\subseteq[d], |r|\leq k, b\in\{\pm 1\}}$ where $c_{r,b}:\{\pm 1\}^d\rightarrow \{\pm 1\}$ is defined as $c_{r,b}(x)=b\cdot \prod_{i\in r} x_i$. Where $k=d$ we omit $k$ and write $\mbox{PARITY}_d$.
\end{definition}

\begin{definition}
A {\em distribution-free} parity learner is a PAC learning algorithm for $\mbox{PARITY}_{d,k}$.
A {\em uniform distribution} parity learner is a PAC learning algorithm for $\mbox{PARITY}_{d,k}$ where the underlying distribution $P$ is known to be uniform over $X=\{\pm 1\}^d$.
\end{definition}

\begin{definition}[private learning~\cite{KLNRS08}]
A concept class $C$ is private PAC learnable by algorithm $L$ with parameters $\alpha, \beta, m ,\varepsilon, \delta$, if $L$ is $(\varepsilon,\delta)$-differential private and  $L$  $(\alpha, \beta, m)$-PAC learns concept class $C$.
\end{definition}

\subsection{Hard tasks for pan-private mechanisms}\label{HardDist}

Cheu and Ullman~$\cite{CheuUllman20}$ provide a family of distributions $\{P_v\}$ for which the sample complexity of any pan-private mechanism distinguishing a randomly chosen distribution in $\{P_v\}$ from uniform is high. 
Let $X = \{\pm 1\}^d$ be the data domain. Let $0\leq \alpha \leq 1/2$, a non-empty set $\ell \subseteq [d]$, and a bit $b \in \{\pm 1\}$, define the distribution $P_{d,\ell,b,\alpha}$ to be
$$
P_{d,\ell,b,\alpha}(x) =
\left\{ \begin{array}{ll}
(1+2\alpha)2^{-d} & \mbox{if}~\prod_{i \in \ell}x_i = b\\
(1-2\alpha)2^{-d} & \mbox{if}~\prod_{i \in \ell}x_i = -b
\end{array}\right.
$$
Equivalently, $P_{d,\ell,b,\alpha}(x)=(1+2b\alpha\prod_{i \in \ell}x_i)\cdot 2^{-d}$.
Define the family of distributions
$$
\mathcal{P}_{d,k,\alpha} = \{P_{d,\ell,b,\alpha}(x): \ell \subseteq [d], |\ell| \leq k, b \in \{\pm 1\}\}.
$$

Let $M:X^n\rightarrow Y$ be a $(\varepsilon,\delta)$-pan-private. 
Let $P_{d,L,B,\alpha}$ be a distribution which is chosen uniformly at random from the family of distributions $\mathcal{P}_{d,k,\alpha}$, i.e., $L$ is a uniformly random subset of $[d]$ with cardinality $\leq k$ and $B\in_R\{\pm 1\}$. 

\begin{theorem}[\cite{CheuUllman20}, restated]\label{theorem:lowerbound}
Let $M$ be a $(\varepsilon,\delta)$-pan-private algorithm. If $d_{TV}(M(P_{d,L,B,\alpha}^n),M(U^n)) = T$ then 
$$n=\Omega\left(T\bigg/ \sqrt{\frac{\varepsilon^2 \alpha^2}{\binom{d}{\leq k}} + \delta \log\frac{\binom{d}{\leq k}}{\delta}} \right).$$
In particular, when $\delta \log\left(\binom{d}{\leq k}/\delta\right) =o\left(\varepsilon^2 \alpha^2/\binom{d}{\leq k}\right)$ we get that
$$n=\Omega\left(\frac{ T\cdot \sqrt{\binom{d}{\leq k}}}{\varepsilon\alpha}\right).$$


\end{theorem}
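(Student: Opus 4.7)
The plan is to prove this pan-private lower bound via a chi-squared divergence argument combined with a pan-privacy-specific contraction. I would proceed in three steps: compute the chi-squared divergence between the mixture $\bar P^n := \mathbb{E}_{(L,B)}[P_{d,L,B,\alpha}^n]$ and the product $U^n$ at the source using the Fourier structure of the family; bound the contraction of this divergence through the pan-private mechanism $M$ via a fingerprinting-style argument; and finally convert back to total variation via a Pinsker-type inequality and solve for $n$.

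For step~1, write $P_{d,\ell,b,\alpha}(x) = U(x)(1 + 2\alpha b\,\chi_\ell(x))$, where $\chi_\ell(x) = \prod_{i\in\ell} x_i$ is the $\ell$-parity character. Because distinct non-empty parity characters are orthonormal in $L^2(U)$, only the ``diagonal'' pairs $(L,B)=(L',B')$ contribute to the two-sample Fourier expansion of the likelihood ratio squared, yielding
\[
d_{\chi^2}\bigl(\bar P^n \,\|\, U^n\bigr) \;=\; \tfrac{1}{2N}\bigl[(1+4\alpha^2)^n + (1-4\alpha^2)^n - 2\bigr],
\]
where $N = \binom{d}{\le k}$ is the family size; in the small-bias regime this is $\Theta(n^2 \alpha^4/N)$.

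For step~2, which is the core technical lemma, I would control the chi-squared divergence after the pan-private mechanism by adapting the fingerprinting-style hybrid arguments of Bun--Ullman--Vadhan to pan-privacy (as in Cheu--Ullman's paper). One bounds, for each parity character $\chi_\ell$, the correlation that $M$'s output can accumulate with $\chi_\ell$ applied to the inputs, using the per-step pan-privacy guarantee that the internal state at each time $t$ is $(\varepsilon,\delta)$-indistinguishable in a change of $x_t$. Summing over the $n$ online steps and averaging over the $N$ characters yields a bound of the form
\[
d_{\chi^2}\bigl(M(\bar P^n) \,\|\, M(U^n)\bigr) \;\lesssim\; n^2\!\left(\tfrac{\varepsilon^2 \alpha^2}{N} + \delta\log\tfrac{N}{\delta}\right),
\]
where the first term reflects the $\varepsilon^2$-scaled fingerprinting budget per sample and the second accounts for catastrophic-failure events of approximate DP.

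For step~3, combining $T^2 \le \tfrac14 d_{\chi^2}(M(\bar P^n) \,\|\, M(U^n))$ with the bound of step~2 and rearranging yields the stated $n = \Omega\bigl(T/\sqrt{\varepsilon^2\alpha^2/N + \delta\log(N/\delta)}\bigr)$. The main obstacle is step~2: establishing the pan-private chi-squared contraction with the correct constants (and in particular handling the $\delta$ term via R\'enyi-divergence-type refinements) is the technical crux of Cheu--Ullman's argument and requires a careful hybrid analysis across the $n$ online steps, with the orthogonality of the parity characters playing an essential role in turning the usual ``one-versus-one'' fingerprinting budget into the $\sqrt{N}$ improvement for ``one-versus-many'' distinguishing.
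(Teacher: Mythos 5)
The paper does not prove Theorem~\ref{theorem:lowerbound}: it is explicitly stated as a restatement of a result from Cheu and Ullman~\cite{CheuUllman20} and is used as a black box. So there is no proof in this paper against which to compare your sketch, and I will evaluate it on its own merits.

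Your Step~1 is a correct calculation. Writing $P_v = U\cdot(1+2\alpha b\,\chi_\ell)$ and using that distinct non-empty parity characters are orthonormal in $L^2(U)$, the two-sample second moment of the mixture's likelihood ratio collapses to the diagonal, and one indeed gets
\[
1+d_{\chi^2}\bigl(\bar P^n\,\|\,U^n\bigr)=1+\tfrac{1}{2N}\Bigl[(1+4\alpha^2)^n+(1-4\alpha^2)^n-2\Bigr],
\]
which is $\Theta(n^2\alpha^4/N)$ in the small-bias regime. Your Step~3 conversion (Cauchy--Schwarz gives $4\,d_{TV}^2\le d_{\chi^2}$, then solve for $n$) is also fine, conditional on Step~2.

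Step~2 is where there is a genuine gap, and I do not think it can be patched in the form you propose. You claim a bound
$d_{\chi^2}\bigl(M(\bar P^n)\,\|\,M(U^n)\bigr)\lesssim n^2\bigl(\varepsilon^2\alpha^2/N+\delta\log(N/\delta)\bigr)$.
Chi-squared is an $f$-divergence with an unbounded kernel, so a $\delta$-probability ``catastrophic'' leakage event --- precisely what $(\varepsilon,\delta)$-DP permits --- can make the chi-squared divergence between outputs arbitrarily large or infinite, regardless of how small $\delta$ is. A term of the shape $\delta\log(N/\delta)$ is the fingerprint of a hockey-stick/total-variation or KL-with-bad-event accounting, not of a chi-squared accounting; it has no natural home on the right-hand side of a chi-squared inequality. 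Relatedly, framing the pan-private contraction as a ``Bun--Ullman--Vadhan fingerprinting hybrid'' is a mischaracterization: fingerprinting codes are the standard device for \emph{central}-DP sample-complexity lower bounds, whereas the pan-privacy bounds of Balcer--Cheu--Joseph--Mao and Cheu--Ullman exploit the sequential structure of the internal state directly --- the per-time-step $(\varepsilon,\delta)$-indistinguishability of $S_{\le t}$ under a change of $x_t$ is chained over the $n$ steps and over the $N$ directions $v$ via a KL/mutual-information-type quantity. If you want to reconstruct that argument, I would keep your Step~1 orthogonality (it is what buys you the $1/N$ averaging) but replace chi-squared by KL (or $I(V;\text{output}, \text{probed state})$) in Step~2, use the standard decomposition of $(\varepsilon,\delta)$-DP into an $\varepsilon$-part plus a $\delta$-bad-event to get a per-step bound of order $\varepsilon^2\alpha^2/N+\delta\log(N/\delta)$, sum over the $n$ online steps, and then apply Pinsker in Step~3.
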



\subsection{Tail inequalities}
\begin{theorem}[Chebyshev's inequality]
Let X be a random variable with expected value $\mu$ and non-zero variance $\sigma^2$. Then  for any positive number $a$,
$${\Pr(|X-\mu |\geq a )\leq {\frac {\sigma^2}{a^{2}}}.}$$
\end{theorem}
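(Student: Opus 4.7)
The plan is to derive Chebyshev's inequality as an immediate consequence of Markov's inequality applied to the non-negative random variable $(X-\mu)^2$. First I would briefly recall Markov's inequality: for any non-negative random variable $Y$ and any $t>0$, one has $\Pr[Y\geq t]\leq \mathbb{E}[Y]/t$. This in turn follows from the pointwise bound $t\cdot\mathbf{1}[Y\geq t]\leq Y$, taking expectations on both sides, and dividing by $t$.

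Next I would set $Y=(X-\mu)^2$, which is non-negative, and $t=a^2$, which is positive since $a>0$. Because the squaring map is monotone on the non-negative reals, the events $\{|X-\mu|\geq a\}$ and $\{(X-\mu)^2\geq a^2\}$ coincide, so $\Pr[|X-\mu|\geq a]=\Pr[(X-\mu)^2\geq a^2]$. Applying Markov's inequality to $Y$ at threshold $t=a^2$ then yields
$$\Pr[(X-\mu)^2\geq a^2]\ \leq\ \frac{\mathbb{E}[(X-\mu)^2]}{a^2}\ =\ \frac{\sigma^2}{a^2},$$
which is exactly the claimed bound.

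There is no genuine obstacle here; this is the textbook one-line reduction, and the only mild cares are that $a>0$ (so squaring preserves the inequality in both directions) and that $\sigma^2=\mathbb{E}[(X-\mu)^2]$ is well-defined and finite, both of which are guaranteed by the hypothesis. If a self-contained exposition is desired, I would additionally include the two-line derivation of Markov's inequality mentioned above; otherwise, citing it as standard is sufficient.
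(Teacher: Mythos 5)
Your proof is correct: it is the standard derivation of Chebyshev's inequality by applying Markov's inequality to the non-negative random variable $(X-\mu)^2$ at threshold $a^2$. The paper states Chebyshev's inequality as a standard textbook fact without supplying a proof, so there is no argument in the paper to compare against; your write-up would serve as a perfectly adequate self-contained justification if one were wanted.
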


\remove{
\begin{theorem}[Chernoff Bound]
Let $X$ be a random variable and $\mathbb{E}(X)=\mu$, for all $\delta>0$,
$$
\Pr[X>(1+\delta)\mu]\leq e^{\frac{-\delta^2\mu}{2+\delta}},
$$
$$
\Pr[X<(1-\delta)\mu]\leq e^{\frac{-\delta^2\mu}{2+\delta}}.
$$
\end{theorem}
}

\section{A lowerbound on the sample complexity of parity learning in the shuffle model}

\subsection{From robust shuffle model parity learner to a pan-private parity learner}

We show how to construct, given a robust shuffle model distribution-free parity learner, a uniform distribution pan-private parity learner.
Our reduction--Algorithm LearnParUnif--is described in~Algorithm~\ref{alg:LearnParUnif}. We use a similar technique to the padding presented in~\cite{BalcerCJM20, CheuUllman20}, with small modifications. To allow the shuffle model protocol use different randomzers $R_1,\ldots,R_n$, the pan-private learner applies these randomizers in a random order (the random permutation $\pi$). The padding is done with samples of the form $(0^d,\hat b)$ where $\hat b$ is a uniformly random selected bit. Finally, as in~\cite{CheuUllman20} the number of labeled samples which the pan-private algorithm considers from its input is binomially distributed, so that if $(x_i,y_i)$ are such that $x_i$ is uniform in $X$ and $y_i = c_{r,b}(x_i) = b\cdot \prod_{i\in r}x_i$ then (after a random shuffle) the input distribution presented to the shuffle model protocol is statistically close to a mixture of the two following distributions: (i) a distribution where $\Pr[(x_i,y_i) =(0^d, \hat b)]=1$ and (ii) a distribution where $x_i$ is uniformly selected in $\{\pm 1\}^d$ and $y_i=c_{r,b}(x_i)$.

\begin{algorithm}[ht]
\SetAlgoLined
\LinesNumbered
Let $M=((R_1,\ldots,R_n),S,A)$ be a 1/3-robust differentially private distribution parity learner. 

\KwIn{$n/3$ labeled examples $(x_i,y_i)$ where $x_i\in X$ and $y_i\in \{\pm 1\}$.}

Randomly choose a permutation $\pi: [n]\rightarrow [n]$.

Randomly choose $\hat b\in_R\{\pm 1\}$.

Create initial state $s_0 \leftarrow S(R_{\pi(1)}(0^d,\hat b),\ldots, R_{\pi(n/3)}(0^d,\hat b))$.

Sample $N' \sim \textbf{Bin}(n,2/9)$.

Set $N' \leftarrow min(N', n/3)$.

\For{$i \in [n/3]$}{
 \eIf{$i\in[N']$}{
    $w_i \leftarrow (x_i,y_i)$}{
    $w_i \leftarrow (0^d, \hat b)$}
 $s_i \leftarrow S(s_{i-1}, R_{\pi(n/3+i)}(w_i))$
 }
 $s_{final} \leftarrow S(s_{n/3}, R_{\pi(2n/3+1)}(0^d,\hat b),\ldots, R_{\pi(n)}(0^d,\hat b))$
 
 \Return{$A(s_{final})$}
 \caption{LearnParUnif, a uniform distribution pan-private parity learner\label{alg:LearnParUnif}}
\end{algorithm}

\begin{proposition}
Algorithm LearnParUnif is $(\varepsilon,\delta)$-pan-private.
\end{proposition}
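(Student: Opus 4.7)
The plan is to fix neighboring inputs $x,x'\in(X\times\{\pm 1\})^{n/3}$ differing at coordinate $j$ and a snapshot time $t\in\{1,\ldots,n/3\}$, and to bound the joint $(s_t,A(s_{\text{final}}))(x)$ against $(s_t,A(s_{\text{final}}))(x')$ by reducing to the $1/3$-robustness of $M$. The core idea is to split the $n$ padded positions into two disjoint batches---the first $n/3+t$ positions (whose shuffled randomizer outputs constitute $s_t$) and the remaining $2n/3-t$ positions---and invoke robustness on each batch separately.

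Concretely, I would first argue that the joint $(s_t,s_{\text{final}})$ is equivalent, up to post-processing with fresh shuffle randomness in either direction, to the pair $(\tilde s_1,\tilde s_2)$, where $\tilde s_1:=s_t$ and $\tilde s_2$ is a fresh shuffle of the complementary batch's randomizer outputs: $\tilde s_2$ is recovered from $(s_t,s_{\text{final}})$ by multiset difference followed by a reshuffle, while $s_{\text{final}}$ is recovered from $(\tilde s_1,\tilde s_2)$ by shuffling the union of their multisets. Since $A$ is a post-processing of $s_{\text{final}}$, it suffices to establish $(\varepsilon,\delta)$-DP for $(\tilde s_1,\tilde s_2)$. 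Conditioning on the input-independent randomness $(\pi,\hat b,N')$, the two batches use disjoint randomizers with independent coins and independent shuffle invocations, so $\tilde s_1$ and $\tilde s_2$ are independent. The swap $w_j\leftrightarrow w'_j$ is a no-op when $j>N'$ (both executions give $w_j=(0^d,\hat b)$); otherwise it alters only the output of $R_{\pi(n/3+j)}$, which lies in $\tilde s_1$ iff $j\le t$ and in $\tilde s_2$ otherwise. For the affected batch, $\gamma$-robustness of $M$ applies: both $n/3+t\ge n/3$ and $2n/3-t\ge n/3$ hold for $t\in\{1,\ldots,n/3\}$, so each batch contains at least $\gamma n$ honest parties and enjoys $(\varepsilon,\delta)$-DP with respect to the swap. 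Independence upgrades this to the joint, and averaging over $(\pi,\hat b,N')$ followed by post-processing by $A$ gives $(\varepsilon,\delta)$-pan-privacy.

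The main obstacle I expect is the ``equivalence under independent reshuffling'' step: one must verify that the joint distribution produced by the nested $S$ invocations in Algorithm~\ref{alg:LearnParUnif} matches the two-independent-batches view, which hinges on $S$ outputting a uniformly random permutation of its input multiset (so that reshuffling a previously shuffled batch, or the union of two shuffled batches, yields the correct conditional distribution). The remaining loose ends---that $\hat b$, $\pi$, and $N'$ are input-independent, and that $\gamma$-robustness can be applied to an arbitrary honest subset of size $\ge\gamma n$ rather than just a prefix of indices---should follow directly from the definitions.
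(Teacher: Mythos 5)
Your proposal is essentially the paper's argument: you split the shuffled messages into the prefix batch of size $n/3+t$ (which determines $s_t$) and the suffix batch of size $2n/3-t$, observe that the changed input lands in exactly one batch, apply $1/3$-robustness to that batch (both batches have size at least $n/3$), and treat the rest as input-independent post-processing of fresh shuffle randomness. The paper presents this as two explicit cases ($t\ge j$ versus $t<j$) while you phrase it as a single argument over which batch contains index $j$; this is a cosmetic difference, not a genuinely different route. Your extra care about the "equivalence under independent reshuffling" step (that $(s_t,s_{\text{final}})$ and the two-independent-batches view are mutual post-processings of one another, using that $S$ is a uniform reshuffle at every stage) is a reasonable way to make precise what the paper's sketch leaves implicit, and it is correct.
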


\begin{proof}[Proof sketch, following~\cite{BalcerCJM20,CheuUllman20}]
Let $x$ and $x'$ be two neighboring data sets and let $j$ be the index where $x$ and $x'$ differ. Let $1 \leq t \leq n/3$ be the time an adversary probes into the algorithm's memory. 

If $t\geq j$ then 
$S_{\leq t}=(S\circ (R^{\pi(1)},\ldots, R^{\pi(n/3+t)}))((0^d,b)^{n/3},w_{1},\ldots,w_{t})$ and, as $M$ is a robust differentially private mechanism $S_{\leq t}$ preserves $(\varepsilon,\delta)$-differential privacy. Because $A(s_{final})$ is post-processing of $S_{\leq t}$ the outcome of $LearnParUnif$ is $(\varepsilon,\delta)$-pan-private.

If $t<j$ then $S_{\leq t}(x)$ is identically distributed to $S_{\leq t}(x')$. Note that as $M$ is a robust differentially private mechanism we get that
$$\sigma = (S\circ (R^{\pi(n/3+t+1)},\ldots,R^{\pi(n)}))(w_{t+1},\ldots,w_{N'},(0^d,b),\ldots,(0^d,b))$$ preserves $(\varepsilon,\delta)$-differential privacy. To conclude the proof, note that $(S_{\leq t}(x), A(s_{final}))$ is the result of post-processing $\sigma$.
\end{proof}

\begin{proposition}[learning]
Let $M$ be a $(\alpha,\beta,m)$ distribution free parity learner, where $\alpha,\beta <1/4$ and $m=n/9$. Algorithm LearnParUnif is a uniform distribution parity learner that with probability at least $1/4$ correctly identifies the concept $c_{r,b}$.
\end{proposition}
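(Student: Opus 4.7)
The plan is to argue that the $n$-item sequence fed to the randomizers $R_{\pi(1)},\ldots,R_{\pi(n)}$ inside LearnParUnif is close in total variation distance to $n$ i.i.d.\ samples from a fixed mixture distribution on which $M$'s distribution-free PAC guarantee directly applies, and then to translate that guarantee back to the uniform distribution.

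First I would introduce the distribution $D_{\hat b}$ on $X\times\{\pm 1\}$ that, conditional on the random bit $\hat b$, outputs the padding $(0^d,\hat b)$ with probability $7/9$ and a uniformly random labeled example $(x,c_{r,b}(x))$ with probability $2/9$. Comparing with the algorithm: the $n$ items that reach the randomizers are the concatenation of $n/3$ paddings, the $n/3$ block $w_1,\ldots,w_{n/3}$ containing $N'$ real samples followed by $n/3-N'$ paddings, and a final $n/3$ paddings, then permuted by the uniform $\pi$. Absent the cap $N'\leftarrow\min(N',n/3)$, this is exactly the distribution of $n$ i.i.d.\ draws from $D_{\hat b}$: under the i.i.d.\ model the number of real positions is $\mathrm{Bin}(n,2/9)$ and, conditional on that count, those positions form a uniformly random subset. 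The cap activates only when $\mathrm{Bin}(n,2/9) > n/3$; by Chebyshev's inequality (variance $14n/81$, deviation $n/9$) this has probability $O(1/n)$, so the total-variation loss is $o(1)$.

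Next I would condition on the event $\hat b = b$, which is independent of everything else and has probability $1/2$. Under this conditioning $c_{r,b}(0^d)=b$, so every sample drawn from $D_b$ carries the correct label and the marginal on $X$ is $P_b=\tfrac{7}{9}\delta_{0^d}+\tfrac{2}{9}U$. Because $M$ is a distribution-free $(\alpha,\beta,m)$ PAC learner with $m=n/9\le n$, feeding it (essentially) $n$ i.i.d.\ samples from $D_b$ returns, with probability at least $1-\beta>3/4$, a hypothesis $h$ with $\mbox{error}_{P_b}(c_{r,b},h)\le\alpha<1/4$.

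Finally I would transfer this back to the uniform distribution using the decomposition
\[
\mbox{error}_{P_b}(c_{r,b},h) \;=\; \tfrac{7}{9}\,\mathbb{1}[h(0^d)\neq b] \;+\; \tfrac{2}{9}\,\mbox{error}_U(c_{r,b},h).
\]
Since $\alpha<1/4<7/9$, the indicator must vanish, forcing $h(0^d)=b$ and bounding $\mbox{error}_U(c_{r,b},h)$ through the remaining $2/9$-weighted term. A union bound over the $o(1)$ coupling loss, the event $\hat b=b$ (probability $1/2$), and the $1-\beta>3/4$ PAC success of $M$ delivers overall success probability at least $\tfrac{1}{2}(1-\beta)-o(1)\ge 1/4$. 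The main obstacle is that the mechanical bound $\mbox{error}_U(c_{r,b},h)\le\tfrac{9\alpha}{2}$ is not obviously nontrivial for all $\alpha<1/4$; the argument will either have to invoke $M$ with a tighter accuracy parameter so that $\tfrac{9\alpha}{2}$ is meaningful, or exploit the combinatorial rigidity of parity (distinct parities disagree on exactly half of $\{\pm1\}^d$) to promote $h$'s accuracy on $P_b$ into an actual identification of $(r,b)$.
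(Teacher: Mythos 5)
Your high-level route is the same as the paper's sketch — guess $\hat b$ (success probability $1/2$), then apply $M$'s distribution-free PAC guarantee to the padded stream — and you usefully make explicit what the paper leaves implicit: the target mixture $P_b=\tfrac{7}{9}\delta_{0^d}+\tfrac{2}{9}U$ is correct, the coupling of the algorithm's input to i.i.d.\ draws from $D_{\hat b}$ is the right picture (the $n/3$ prefix paddings, the $\mathrm{Bin}(n,2/9)$ real block, and the suffix paddings get symmetrized by $\pi$), and your Chebyshev bound of $O(1/n)$ for the cap event is correct.

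However, the obstacle you flag at the end is a genuine gap, and your two suggested repairs do not, as phrased, close it. From $\mathrm{error}_{P_b}(c_{r,b},h)\leq\alpha$ you correctly deduce $h(0^d)=b$ and $\mathrm{error}_U(c_{r,b},h)\leq\tfrac{9\alpha}{2}$, but with $\alpha<1/4$ this last bound exceeds $1$ and carries no information. Parity rigidity — that two distinct parities disagree on exactly half of $\{\pm1\}^d$ — only bites once you already have $\mathrm{error}_U<1/2$, which would require $\alpha<1/9$, not the stated $\alpha<1/4$; and since Definition~\ref{def:PAC} permits improper hypotheses, $\mathrm{error}_U(c_{r,b},h)$ is not restricted to $\{0,1/2\}$, so even $\mathrm{error}_U<1/2$ only identifies $c_{r,b}$ after you additionally argue that rounding $h$ to its nearest parity under $U$ is allowed (which is fine for a sample-complexity lowerbound, but must be said). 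Your ``invoke $M$ with a tighter $\alpha$'' suggestion is a restatement of what needs to be proved rather than a proof. For fairness: the paper's own sketch simply asserts that $M$ ``uniquely identifies $r,b$ with probability at least $1/2$'' without engaging this step, so the hole you located is there as well. To finish, you should explicitly strengthen the hypothesis on $\alpha$ (take $\alpha$ small enough that $\tfrac{9\alpha}{2}<1/2$, which only perturbs constants in the eventual $\Omega(2^{d/2})$ bound), take $M$ WLOG proper by appending a nearest-parity rounding step, and then conclude identification and the overall $\tfrac{1}{2}(1-\beta)-o(1)\geq 1/4$ success probability exactly as you outline.
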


\begin{proof}[Proof sketch]
Algorithm LearnParUnif correctly guesses the label $b$ for $0^d$ with probability $1/2$. Assuming $\hat b = b$ the application of $M$ uniquely identifies $r,b$ with probability at least $1/2$. Thus, $LearnParUnif$ recovers $c_{r,b}$ with probability at least 1/4.
\end{proof}

\subsection{From pan-private parity learner to distinguishing hard distributions}
In this section, we use Theorem~\ref{theorem:lowerbound} to obtain a lowerbound on the sample complexity of parity learning in the shuffle model. 
In Algorithm~\ref{DistParity}, we provide a reduction from identifies the hard distribution $P_{d,\ell,b,1/2}$ presented in section~\ref{HardDist} to pan-private parity learning. 

\begin{algorithm}\label{DistParity}
\caption{IdentifyHard, a pan-private for identifying the distribution $P_{d,\ell,b,1/2}$ }
\SetAlgoLined
\LinesNumbered
Let $\Pi$ be a pan-private uniform distribution parity learner.

\KwIn{A sample of $n$ examples $z=(z_1, z_2, \ldots, z_n)$, where each example is of the form $z_j=(z_j[1],z_j[2],\ldots,z_j[d])\in\{\pm1\}^d$}

Randomly choose $i^* \in_R [d]$.

/* Apply the uniform distribution parity learner $\Pi$: */

\For{$j\in[n]$}{
$y_j \leftarrow z_j[i^*]$

$x_j = z_j$ 

$x_j[i^*]=\bot$ /* i.e., $x_j$ equals $z_j$ with entry $i^*$ erased */

Provide $(x_j,y_j)$ to $\Pi$. \label{step:ApplyLearner}
}

$(r,b) \leftarrow \Pi((x_1, y_1), \ldots, (x_n,y_n))$

$\ell \leftarrow r \cup \{i^*\}$

\Return{$(\ell,b)$}
\end{algorithm}

\begin{observation}
The pan-privacy of Algorithm~\ref{DistParity} follows from the pan-privacy of algorithm $\Pi$.
\end{observation}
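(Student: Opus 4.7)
The plan is to reduce the pan-privacy of IdentifyHard directly to the pan-privacy of $\Pi$, by arguing that IdentifyHard performs only three operations on its input: draw a data-independent random index $i^*$, apply a per-example deterministic transformation keyed by $i^*$, and post-process what $\Pi$ returns.

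First I would fix neighboring inputs $z, z' \in (\{\pm 1\}^d)^n$ differing only at some index $j^*$, together with an adversarial probe time $t$. Because $i^*$ is drawn in advance and independently of the data, conditioning on any value of $i^*$ the map $z_j \mapsto (x_j, y_j)$, where $y_j = z_j[i^*]$ and $x_j$ equals $z_j$ with coordinate $i^*$ erased, is deterministic and acts independently on each example. Consequently the two transformed streams handed to $\Pi$ differ in at most the single position $j^*$, so they are neighboring in the sense required by the definition of pan-privacy.

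Next I would observe that IdentifyHard maintains no mutable state of its own: its visible internal state at time $t$ is just $i^*$ together with $\Pi$'s internal state after the first $t$ transformed examples, and its final return value $(\ell, b) = (r \cup \{i^*\}, b)$ is a deterministic function of $(i^*, (r,b))$. Hence the joint distribution of (internal state at time $t$, output) produced by IdentifyHard on input $z$ is obtained by a data-independent post-processing of the analogous joint distribution for $\Pi$ on the transformed stream, together with $i^*$. Invoking the pan-privacy of $\Pi$ on the neighboring transformed streams and then the post-processing property of differential privacy yields $(\varepsilon, \delta)$-pan-privacy for IdentifyHard; independence of $i^*$ from the data ensures its presence in both the state and the output contributes no additional privacy loss.

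I do not foresee a genuine obstacle, since the entire argument is bookkeeping about which variables count as internal state and confirming that $i^*$ is data-independent. The only thing to verify with care is that the outer index $j$ in IdentifyHard is in one-to-one correspondence with $\Pi$'s time index, so that a probe at time $t$ of IdentifyHard translates to a probe at exactly time $t$ of $\Pi$; this is immediate from the pseudocode.
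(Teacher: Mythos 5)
Your argument is correct and is precisely the routine bookkeeping that the paper's unproved observation relies on: $i^*$ is drawn independently of the data, the map $z_j \mapsto (x_j,y_j)$ is deterministic and per-example once $i^*$ is fixed (so neighboring $z,z'$ induce neighboring input streams for $\Pi$), and both the augmented internal state (which merely adjoins the data-independent $i^*$ to $\Pi$'s state at time $t$) and the final output $(r\cup\{i^*\},b)$ are post-processings, which preserve $(\varepsilon,\delta)$-pan-privacy; averaging over the data-independent $i^*$ also preserves it, as you note. Since the paper states this as an observation with no accompanying proof, there is no alternative argument to compare against; your write-up correctly supplies the intended one.
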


\begin{proposition}
Given a uniform distribution parity learner that with probability at least $1/4$ correctly identifies the concept $c_{r,b}$, algorithm \ref{DistParity} can correctly identify the distribution $P_{d,\ell,b,1/2}$ with probability at least $\frac{|\ell|}{4d}$.
\end{proposition}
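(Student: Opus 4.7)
The plan is to condition on whether the randomly chosen index $i^*$ happens to lie in $\ell$, show that in this favorable event the reduction presents $\Pi$ with exactly a uniform distribution parity learning instance whose target concept encodes $\ell$ and $b$, and then invoke the hypothesis on $\Pi$.

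First I would analyze the structure of the hard distribution. Since $\alpha = 1/2$, for any $z \sim P_{d,\ell,b,1/2}$ the coordinates $(z[i])_{i \in \ell}$ deterministically satisfy $\prod_{i\in \ell}z[i] = b$ (the weight on the other parity is $(1-2\alpha)\cdot 2^{-d} = 0$), while the remaining $d-|\ell|$ coordinates are uniform and independent. In particular, for any fixed index $i^\star \in \ell$, the marginal distribution of $(z[i])_{i \neq i^\star}$ is uniform over $\{\pm 1\}^{d-1}$: among the coordinates inside $\ell$, knowing $|\ell|-1$ of them determines the remaining one through the parity constraint, so the $|\ell|-1$ free ones are uniform, and they are independent of the other $d-|\ell|$ coordinates.

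Next I would push this into the reduction. With probability $|\ell|/d$ the index $i^*$ chosen by IdentifyHard lies in $\ell$; call this event $E$. Condition on $E$ and set $r := \ell \setminus \{i^*\}$. Then $x_j$ (the erased vector of the remaining $d-1$ coordinates) is uniform on $\{\pm 1\}^{d-1}$ by the marginal computation above, and
\[
y_j \;=\; z_j[i^*] \;=\; b\cdot\prod_{i \in \ell \setminus \{i^*\}} z_j[i] \;=\; b\cdot\prod_{i\in r} x_j[i] \;=\; c_{r,b}(x_j).
\]
So the stream $(x_1,y_1),\ldots,(x_n,y_n)$ passed to $\Pi$ in Step~\ref{step:ApplyLearner} is distributed exactly as $n$ i.i.d.\ labeled samples drawn from the uniform distribution on $\{\pm 1\}^{d-1}$ and labeled by $c_{r,b} \in \mathrm{PARITY}_{d-1}$, which is precisely the input distribution promised to the uniform distribution parity learner $\Pi$.

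By hypothesis, $\Pi$ then returns the pair $(r,b)$ correctly with probability at least $1/4$ (conditional on $E$). Whenever $\Pi$ succeeds, the algorithm outputs $(r \cup \{i^*\}, b) = (\ell, b)$, which is the correct identification. Since $E$ and the success of $\Pi$ combine multiplicatively, the overall success probability is at least $\frac{|\ell|}{d}\cdot\frac{1}{4} = \frac{|\ell|}{4d}$. The only subtlety is the marginal argument in the first step, i.e.\ checking that erasing a coordinate in $\ell$ really does yield a uniform $x_j$; this is straightforward but is the one place where the $\alpha = 1/2$ assumption is essential, since any smaller $\alpha$ would leave $x_j$ genuinely non-uniform and break the reduction to the \emph{uniform}-distribution learner.
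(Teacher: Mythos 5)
Your proof is correct and follows essentially the same route as the paper: condition on the event $i^* \in \ell$ (probability $|\ell|/d$), observe that under this event the transformed stream is exactly $n$ i.i.d.\ uniform samples over $\{\pm 1\}^{d-1}$ labeled by $c_{\ell\setminus\{i^*\},b}$, and multiply by the learner's $1/4$ success probability. The paper's one-line proof sketch leaves the marginal computation (that erasing a coordinate of $\ell$ from $P_{d,\ell,b,1/2}$ yields the uniform distribution on the remaining coordinates, and that the label equals $c_{\ell\setminus\{i^*\},b}(x_j)$ via the parity constraint) implicit; you spell it out, which is exactly what is needed.
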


\begin{proof}
Note that with probability $|\ell|/d$ we get that $i^*\in \ell$, in which case the inputs $x_1,\ldots,x_n$ provided to the learner $\Pi$ in Step~\ref{step:ApplyLearner} are uniformly distributed in $\{\pm 1\}^{d-1}$ and $y_j = b\cdot \prod_{i\in \ell \setminus \{i^*\}} x_j[i]$, i.e., the inputs to $\Pi$ are consistent with the concept $c_{\ell \setminus \{i^*\}, b}$. 
\end{proof}

On the uniform distribution, the generalization error of any parity function is $1/2$. On $P_{d,\ell,b,1/2}$ Algorithm~\ref{DistParity} succeeds with probability $|\ell|/4d$ to identify $\ell, b$. Algorithm \ref{correctness} evaluates the generalization error of the concept learned in algorithm~\ref{DistParity} towards exhibiting a large total variance distance on $P_{d,L,B,1/2}^n$ and $U^n$.

\begin{algorithm}\label{correctness}
\DontPrintSemicolon
\caption{$DistPU$: Distinguisher for $P_{d,L,B,1/2}^{n+m}$ and $U^{n+m}$}
\SetAlgoLined
\LinesNumbered
Let $M=((R_1,\ldots,R_n),S,A)$ be the pan-private algorithm described in Algorithm~\ref{DistParity}.

\KwIn{A sample of $m+n$ examples $z=(z_1, z_2, \ldots, z_{n+m})$, where $m=\max\{512d/k,64\sqrt{2d/k}/\varepsilon\}$ and each example is of the form $z_j=(z_j[1],z_j[2],\ldots,z_j[d])\in\{\pm1\}^d$.}

Let $(\ell,b)$ be the outcome of executing $M$ On the first $n$ examples $z_1,\ldots,z_n$.

$c \leftarrow \textbf{Lap}(1/\varepsilon)$

\For{$i\in[m]$}{
    \lIf{$\prod_{j \in \ell}z_{i+n}[j] = b$}{
        $c\leftarrow c+1$
    }
}

$c^* \leftarrow c+ \textbf{Lap}(1/\varepsilon)$

\leIf{$c^* \geq 3m/4$}{
    \Return{1}
}{
    \Return{0}
}
\end{algorithm}

Observe that if $z\sim P_{d,L,B,1/2}^{n+m}$ then in every execution of Algorithm~\ref{correctness} there exists $\ell \subset [d]$ of cardinality at most $k$ and $b\in\{\pm 1\}$ such that $z\sim P_{d,\ell,b,1/2}^{n+m}$.

\begin{proposition}
$\Pr_{z \sim P_{d,\ell,b,1/2}^{n+m}}[DistPU(z)=1]\geq\frac{|\ell|}{8d}$.
\end{proposition}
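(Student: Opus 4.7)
The plan is to split the success probability according to whether the pan-private algorithm $M$ of Algorithm~\ref{DistParity}, run on the first $n$ inputs, correctly identifies the true pair $(\ell,b)$. Call this event $E_1$, and let $E_2$ be the event that the noisy count satisfies $c^*\geq 3m/4$. Because $z_1,\ldots,z_n$ and $z_{n+1},\ldots,z_{n+m}$ are independent i.i.d.\ draws from $P_{d,\ell,b,1/2}$, the last $m$ samples are independent of $M$'s output and of the Laplace noises, so $\Pr[E_1\cap E_2]=\Pr[E_1]\cdot \Pr[E_2\mid E_1]$. The preceding proposition already supplies $\Pr[E_1]\geq |\ell|/(4d)$, so it suffices to prove $\Pr[E_2\mid E_1]\geq 1/2$.

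To analyze $E_2$ conditioned on $E_1$, I would use the observation that at $\alpha=1/2$ the distribution $P_{d,\ell,b,1/2}$ is supported entirely on $\{x\in\{\pm 1\}^d : \prod_{i\in\ell}x_i = b\}$, since the branch with weight $(1-2\alpha)2^{-d}$ vanishes. Conditional on $E_1$, the parity check in the loop of Algorithm~\ref{correctness} compares each $z_{i+n}$ against the \emph{true} parity, so all $m$ samples trigger the increment. Consequently $c = m + L_1$ and $c^* = m + L_1 + L_2$, where $L_1,L_2\sim \textbf{Lap}(1/\varepsilon)$ are independent. The event $E_2$ then becomes $\{L_1+L_2\geq -m/4\}$, and it is enough to upper bound $\Pr[|L_1+L_2|\geq m/4]$.

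Since $L_1+L_2$ has mean $0$ and variance $4/\varepsilon^2$, Chebyshev's inequality gives $\Pr[|L_1+L_2|\geq m/4]\leq 64/(m^2\varepsilon^2)$. Substituting the algorithm's choice $m\geq 64\sqrt{2d/k}/\varepsilon$, we obtain $m^2\varepsilon^2\geq 64^2\cdot 2d/k \geq 2\cdot 64^2$ (using $d\geq k$), so the tail is at most $1/128\leq 1/2$. Multiplying the two factors yields $\Pr[E_1\cap E_2]\geq |\ell|/(8d)$ as required.

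The main subtlety to handle is making $\Pr[E_2\mid E_1]$ a universal constant independent of $|\ell|$: the entire $|\ell|$-dependence must enter through $\Pr[E_1]$, while the sample size $m$ is chosen as a function of $d/k$ (the worst-case $d/|\ell|$ across the family $\mathcal{P}_{d,k,1/2}$) precisely so that Chebyshev kills the Laplace noise uniformly in $|\ell|$. Beyond verifying independence of $L_1,L_2$ from $E_1$ and pinning down the constants matching the $3m/4$ threshold, I do not anticipate a deeper technical obstacle.
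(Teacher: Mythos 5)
Your proposal is correct and follows the same decomposition as the paper: condition on the event $E_1$ that $M$ recovers the true $(\ell,b)$, observe that because $\alpha=1/2$ the distribution $P_{d,\ell,b,1/2}$ is supported entirely on $\{\prod_{i\in\ell}x_i=b\}$ so conditional on $E_1$ the counter is $c^*=m+L_1+L_2$, and reduce to bounding $\Pr[L_1+L_2\geq -m/4]$. The only place you diverge is in that last step: you invoke Chebyshev's inequality together with the specific value $m\geq 64\sqrt{2d/k}/\varepsilon$ to bound the tail by $1/128$, whereas the paper simply notes that $L_1+L_2$ is symmetric about $0$, so $\Pr[L_1+L_2\geq -m/4]\geq\Pr[L_1+L_2\geq 0]=1/2$ for any $m\geq 0$. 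Both arguments are valid and yield the same constant $1/8$, but the symmetry observation is cleaner: it requires no control on $m$ or $\varepsilon$ and makes it evident that the choice of $m$ in Algorithm~\ref{correctness} is needed only for the uniform-distribution bound in the next proposition, not for this one. Your Chebyshev route also works (and in fact gives a sharper bound of $1-1/128$), so this is a matter of economy of argument rather than a gap.
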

\begin{proof}
For any $z \sim P_{d,\ell,b,1/2}^{n+m}$, we always have $\prod_{i \in \ell}z_i = b$, so
$$
\begin{aligned}
\Pr[DistPU(z)=1]&\geq
\Pr[DistPU~\mbox{correctly identifies}~(\ell,b)] \cdot \Pr[c^* \geq 3m/4] \\
&\geq \frac{|\ell|}{4d}\cdot\Pr[\textbf{Lap}(1/\varepsilon)+\textbf{Lap}(1/\varepsilon)\geq -m/4] \\
&\geq\frac{|\ell|}{4d}\cdot\frac{1}{2} \quad\quad\quad\quad  \mbox{(By symmetry of \textbf{Lap} around 0)}\\
&=\frac{|\ell|}{8d}.
\end{aligned}
$$
\end{proof}

\begin{proposition}
$\Pr_{z \sim U^{n+m}}[DistPU(z)=1]\leq\frac{k}{64d}$.
\end{proposition}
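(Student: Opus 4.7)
The plan is to condition on the output $(\ell, b)$ of the pan-private algorithm $M$ applied to the first $n$ samples, and then exploit the fact that under the uniform distribution the remaining $m$ samples are independent of $(\ell, b)$ and that any nonempty parity of uniform bits is itself a uniform $\pm 1$ bit. Since Algorithm~\ref{DistParity} sets $\ell = r \cup \{i^*\}$, the output set $\ell$ is always nonempty, so for every fixed $(\ell, b)$ and every $i \in [m]$ we have $\Pr[\prod_{j \in \ell} z_{n+i}[j] = b] = 1/2$ and these events are independent across $i$.

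Next I would analyze $c^*$ as a sum of three independent random variables: a $\mathrm{Bin}(m, 1/2)$ count with mean $m/2$ and variance $m/4$, and two independent $\mathbf{Lap}(1/\varepsilon)$ random variables each with mean $0$ and variance $2/\varepsilon^2$. Thus $\mathbb{E}[c^*] = m/2$ and $\mathrm{Var}(c^*) = m/4 + 4/\varepsilon^2$. The event $\{c^* \geq 3m/4\}$ is contained in $\{|c^* - m/2| \geq m/4\}$, so by Chebyshev's inequality
\[
\Pr[c^* \geq 3m/4 \mid (\ell, b)] \;\leq\; \frac{m/4 + 4/\varepsilon^2}{(m/4)^2} \;=\; \frac{4}{m} + \frac{64}{m^2 \varepsilon^2}.
\]

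Finally I would plug in the choice $m = \max\{512 d/k,\; 64\sqrt{2d/k}/\varepsilon\}$. From $m \geq 512 d/k$ the first term is at most $k/(128 d)$, and from $m \geq 64\sqrt{2d/k}/\varepsilon$ we get $m^2 \varepsilon^2 \geq 8192 \, d/k$, so the second term is also at most $k/(128 d)$. Summing gives $k/(64 d)$, and since this bound is uniform in the conditioning $(\ell, b)$, averaging over $(\ell, b)$ yields the claimed inequality. I do not anticipate a serious obstacle here; the only thing to be careful about is keeping the two independent Laplace noises in the variance count and noting that the conditioning on $(\ell,b)$ is valid because $(\ell,b)$ is a function of $z_1, \ldots, z_n$ only.
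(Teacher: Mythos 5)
Your proof is correct and takes essentially the same approach as the paper: write $c^*$ as $\mathbf{Bin}(m,1/2)+\mathbf{Lap}(1/\varepsilon)+\mathbf{Lap}(1/\varepsilon)$, apply Chebyshev to get $4/m+64/(\varepsilon^2 m^2)$, and bound each term by $k/(128d)$ using the two terms in the definition of $m$. You are a bit more explicit than the paper about conditioning on $(\ell,b)$ and about why $\ell$ being nonempty makes $\prod_{j\in\ell}z_{n+i}[j]$ a fair coin, but the argument is the same.
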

\begin{proof}
For all $(\ell, b)$, we have that 
$\Pr_{z\sim U}[\prod_{j\in\ell}z[j]=b]=1/2$, so we have
$$
\begin{aligned}
\Pr_{z \sim U^{n+m}}[DistPU(z)=1]&=\Pr[\textbf{Bin}(m,1/2)+\textbf{Lap}(1/\varepsilon)+\textbf{Lap}(1/\varepsilon)\geq 3m/4]\\
&\leq \Pr[\left|\textbf{Bin}(m,1/2)+\textbf{Lap}(1/\varepsilon)+\textbf{Lap}(1/\varepsilon)-m/2\right|\geq m/4]\\
&\leq \frac{m/4+2/\varepsilon^2+2/\varepsilon^2}{m^2/16}\quad\quad \mbox{(Chebyshev's inequality)}\\
&=4/m+64/\varepsilon^2m^2\\
&\leq\frac{k}{128d}+\frac{k}{128d} = \frac{k}{64d}.
\end{aligned}
$$
\end{proof}


\begin{proposition}
$d_{TV}(DistPU(U^{n+m}),DistPU(P_{d,L,B,1/2}^{n+m}))\geq\frac{k}{64d}$. 
\end{proposition}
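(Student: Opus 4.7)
The plan is to combine the two preceding propositions directly. Since $DistPU$ outputs a single bit, the TV distance between its output distributions collapses to the absolute difference in acceptance probabilities:
$$
d_{TV}(DistPU(U^{n+m}), DistPU(P_{d,L,B,1/2}^{n+m})) = \left|\Pr[DistPU(P_{d,L,B,1/2}^{n+m}) = 1] - \Pr[DistPU(U^{n+m}) = 1]\right|.
$$
I would then plug in the two preceding bounds. The upper bound $\Pr_{z \sim U^{n+m}}[DistPU(z) = 1] \leq k/(64d)$ applies directly. For the lower bound, averaging the per-$(\ell,b)$ estimate $\Pr_{z \sim P_{d,\ell,b,1/2}^{n+m}}[DistPU(z) = 1] \geq |\ell|/(8d)$ over the uniformly random choice of $(L,B)$ yields $\Pr[DistPU(P_{d,L,B,1/2}^{n+m}) = 1] \geq \mathbb{E}_L[|L|]/(8d)$. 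Subtracting, $d_{TV} \geq \mathbb{E}_L[|L|]/(8d) - k/(64d)$.

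The remaining step is a lower bound on $\mathbb{E}_L[|L|]$. Since $L$ is uniform over non-empty subsets of $[d]$ of cardinality at most $k$, a short combinatorial argument (the mass of $\sum_{j=1}^k \binom{d}{j}$ concentrates near the largest allowed size, so with constant probability $|L|$ is comparable to $k$) yields $\mathbb{E}_L[|L|] \geq k/4$. Substituting, $d_{TV} \geq k/(32d) - k/(64d) = k/(64d)$, as desired.

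The only mildly nontrivial step is this combinatorial lower bound on $\mathbb{E}_L[|L|]$; everything else is pure substitution into the two preceding propositions. I expect that in the eventual application to Theorem~\ref{theorem:lowerbound} only the regime of interest (e.g., $k = d$, where $\mathbb{E}_L[|L|]$ is very close to $d/2$) is really needed, so the constant can be absorbed without issue.
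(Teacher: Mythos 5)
Your proposal is correct and follows essentially the same route as the paper's proof. The only cosmetic difference is where you insert the combinatorial fact: the paper restricts the sum to $\ell$ with $|\ell|\geq k/2$, lower-bounds each such term by $k/(16d)$, and then uses $\Pr[|L|\geq k/2]\geq 1/2$ (proved in a footnote); you instead keep the full sum, rewrite it as $\mathbb{E}_L[|L|]/(8d)$, and invoke $\mathbb{E}_L[|L|]\geq k/4$, which is exactly what the tail bound $\Pr[|L|\geq k/2]\geq 1/2$ yields. The constants line up identically ($k/(32d)-k/(64d)=k/(64d)$), so this is the same argument in slightly different packaging; the one step you leave as a sketch -- the lower bound on $\mathbb{E}_L[|L|]$ -- is precisely the footnote's content in the paper and is correct.
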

\begin{proof}
\begin{eqnarray*}
\lefteqn{d_{TV}(DistPU(U^{n+m}),DistPU(P_{d,L,B,1/2}^{n+m}))} \\
&\geq & \Pr_{z \sim P_{d,L,B,1/2}^{n+m}}[DistPU(z)=1]-\Pr_{z \sim U^{n+m}}[DistPU(z)=1]\\
&= & \sum_{\ell\in[d],|\ell|\leq k,b\in\{\pm1\}}\Pr_{z \sim P_{d,\ell,b,1/2}^{n+m}}[DistPU(z)=1]\cdot\Pr[(L,B)=(\ell,b)]-\Pr_{z \sim U^{n+m}}[DistPU(z)=1]\\
&\geq & \sum_{\ell\in[d],k/2\leq|\ell|\leq k,b\in\{\pm1\}}\Pr_{z \sim P_{d,\ell,b,1/2}^{n+m}}[DistPU(z)=1]\cdot\Pr[(L,B)=(\ell,b)]-\Pr_{z \sim U^{n+m}}[DistPU(z)=1]\\
&\geq & \sum_{\ell\in[d],k/2\leq|\ell|\leq k,b\in\{\pm1\}} \frac{k}{16d}\cdot\Pr[(L,B)=(\ell,b)]-\Pr_{z \sim U^{n+m}}[DistPU(z)=1]\\
& = & \frac{k}{16d}  \cdot \Pr[|L|\geq k/2]-\Pr_{z \sim U^{n+m}}[DistPU(z)=1]\\
& = & \frac{k}{16d} \cdot \frac{\binom{d}{\leq k}- \binom{d}{\leq k/2}}{\binom{d}{\leq k}}-\Pr_{z \sim U^{n+m}}[DistPU(z)=1] \geq \frac{k}{32d}-\Pr_{z \sim U^{n+m}}[DistPU(z)=1] \geq \frac{k}{64d}.
\end{eqnarray*}
\end{proof}
The last inequality follows from $\frac{\binom{d}{\leq k}- \binom{d}{\leq k/2}} {\binom{d}{\leq k}} \geq 1/2$.\footnote{If $k=d$ then $\binom{d}{\leq k} \geq 2\binom{d}{\leq k/2}$. 
Otherwise ($k<d$) we get for $0\leq i \leq \lfloor k/2\rfloor$ that the difference between $\lfloor k/2\rfloor +1+i$ and $d/2$ is smaller than the difference between $\lfloor k/2\rfloor -i$ and $d/2$ hence $\binom{d}{\lfloor k/2\rfloor-i} < \binom{d}{\lfloor k/2\rfloor +1+i}$, thus $\binom{d}{\leq k} = \sum_{0\leq i\leq \lfloor k/2\rfloor}{\binom{d}{i}} + \sum_{\lfloor k/2\rfloor + 1\leq i\leq k}{\binom{d}{i}} > 2 \sum_{0\leq i\leq \lfloor k/2\rfloor}{\binom{d}{i}} = 2\binom{d}{\leq k/2}$.}

In particular, for all $k$ we get that $d_{TV}(DistPU(U^{n+m}),DistPU(P_{d,L,B,1/2}^{n+m})) \geq k/64d$ and for $k=d$ we get $d_{TV}(DistPU(U^{n+m}),DistPU(P_{d,L,B,1/2}^{n+m})) \geq 1/64$.

\begin{theorem}\label{thm:LowerBoundParity}
For any $(\varepsilon,\delta,1/3)$-robust private distribution-free parity learning algorithm in the shuffle model, where $\varepsilon=O(1)$, the sample complexity is
$$n=\Omega\left(\frac{2^{d/2}}{\varepsilon}\right).$$
\end{theorem}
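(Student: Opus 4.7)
The plan is to chain together the three reductions built in this section and then invoke Theorem~\ref{theorem:lowerbound}. Suppose we are given an $(\varepsilon,\delta,1/3)$-robust shuffle model distribution-free parity learner with $n$ parties. Applying LearnParUnif (Algorithm~\ref{alg:LearnParUnif}) produces an $(\varepsilon,\delta)$-pan-private uniform-distribution parity learner that takes $n/3$ labeled examples and succeeds with probability at least $1/4$. Feeding this pan-private learner into IdentifyHard (Algorithm~\ref{DistParity}) and then DistPU (Algorithm~\ref{correctness}) yields an $(\varepsilon,\delta)$-pan-private distinguisher that consumes $n/3+m$ examples, where $m=\max\{512d/k,\,64\sqrt{2d/k}/\varepsilon\}$, and separates $P_{d,L,B,1/2}^{n/3+m}$ from $U^{n/3+m}$ in total variation distance by at least $k/(64d)$, by the propositions just proved.

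Setting $k=d$ specializes these bounds to $\binom{d}{\leq d}=2^d$ and total variation separation $T\geq 1/64$. Plugging $T=1/64$, $\alpha=1/2$, and $\binom{d}{\leq d}=2^d$ into the clean form of Theorem~\ref{theorem:lowerbound} yields
$$
\frac{n}{3}+m \;=\; \Omega\!\left(\frac{T\sqrt{\binom{d}{\leq d}}}{\varepsilon\,\alpha}\right) \;=\; \Omega\!\left(\frac{2^{d/2}}{\varepsilon}\right).
$$
For $k=d$ one has $m=\max\{512,\,64\sqrt{2}/\varepsilon\}=O(1/\varepsilon)$, which is dominated by the right-hand side under the hypothesis $\varepsilon=O(1)$; absorbing this term together with the constant factor $3$ delivers the target conclusion $n=\Omega(2^{d/2}/\varepsilon)$.

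The main obstacle is administrative rather than conceptual: one must verify that pan-privacy and the relevant parameters propagate cleanly through the three reductions, and that the hypothesis for the clean form of Theorem~\ref{theorem:lowerbound} is met. Concretely, that hypothesis requires $\delta\log(\binom{d}{\leq d}/\delta)=o(\varepsilon^2\alpha^2/\binom{d}{\leq d})$, i.e.\ $\delta=o(\varepsilon^2/(d\,2^d))$; this should be added as a standing assumption on $\delta$, or, alternatively, one can fall back on the general bound of Theorem~\ref{theorem:lowerbound} and argue that the $\varepsilon$-term dominates the $\delta$-term. The Laplace noise added inside DistPU is absorbed by post-processing and only changes the pan-privacy parameters by constant factors, while the constant success probabilities of the intermediate learning and identification steps affect only the hidden constants in the $\Omega(\cdot)$.
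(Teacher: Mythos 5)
Your proposal is correct and follows essentially the same route as the paper: chain LearnParUnif, IdentifyHard, and DistPU; set $k=d$ to obtain total-variation separation $\geq 1/64$ and $\binom{d}{\leq d}=2^d$; invoke Theorem~\ref{theorem:lowerbound}; and note that $m$ is dominated by $2^{d/2}/\varepsilon$, so the bound transfers to $n$. Your explicit flag that the ``clean form'' of Theorem~\ref{theorem:lowerbound} requires $\delta\log(\binom{d}{\leq d}/\delta)=o(\varepsilon^2\alpha^2/\binom{d}{\leq d})$, i.e.\ $\delta=o(\varepsilon^2/(d\,2^d))$, is a reasonable observation --- the paper's proof applies this form of the theorem without restating the hypothesis on $\delta$ --- but otherwise the argument, down to the accounting of $m=O(1/\varepsilon)$ for $k=d$, matches the paper.
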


\begin{proof}
Let $k=d$, applying Theorem \ref{theorem:lowerbound}, $DistPU$ has sample complexity 
$$
n+m =\Omega\left(\frac{2^{d/2}}{\varepsilon}\right).
$$
Since $k\geq 1$, $\varepsilon=O(1)$, $m=O(d/\varepsilon)$. By the of Algorithm $DistPU$ from a $(\varepsilon,\delta,1/3)$-robust private parity learning algorithm, any $(\varepsilon,\delta,1/3)$-robust private parity learning algorithm has sample complexity
$$
n=\Omega\left(\frac{2^{d/2}}{\varepsilon}\right).
$$
\end{proof}

\subsection{Tightness of the lowerbound}

We now observe that Theorem~\ref{thm:LowerBoundParity} is tight as there exists a $1/3$-robust agnostic parity learner in the shuffle model with an almost matching sample complexity. 
For every possible hypothesis $(\ell,b)$ (there are $2^{d+1}$ hypotheses) the learner estimates the number of samples which are consistent with the hypothesis, $c_{\ell,b}=|\{i: b\cdot \prod_{j\in\ell} x_i[j]=y_i\}|$.

One possibility for counting the number of consistent samples is to use the protocol by Balle et al.~\cite{BalleBGN19a} which is an $(\epsilon,\delta)$-differentially private one-round shuffle model protocol for estimating $\sum a_i$ where $a_i\in[0,1]$. 
The outcome of this protocol is statistically close to $\sum a_i + DLap(1/\epsilon)$ and the statistical distance $\delta$ can be made arbitrarily small by increasing the number of messages sent by each agent. (We use the notation $DLap(1/\epsilon)$ for the Discrete Laplace distribution, where the probability of selecting $i\in \mathbb{Z}$ is proportional to $e^{-\epsilon |i|}$). The protocol uses the divisibility of Discrete Laplace random, generating Discrete Laplace noise $\nu$ as the sum of differences of Polya random variables: $\nu = \sum_{i=1}^n Polya(1/n, e^{-\epsilon}) - Polya(1/n, e^{-\epsilon})$. 
To make the protocol $\gamma$-robust, we slightly change the noise generation to guarantee $(\epsilon,\delta)$ differential privacy in the case where only $n/3$ parties participate in the protocol. 
This can be done by changing the first parameter of the Polya random variables to $3/n$ resulting in $\nu = \sum_{i=1}^n Polya(3/n, e^{-\epsilon}) - Polya(3/n, e^{-\epsilon})$. Observe that $\nu$ is distributed as the sum of three independent $DLap(1/\epsilon)$ random variables. 
Using this protocol, it is possible for the analyzer to compute a noisy estimate of the number of samples consistent with each hypothesis, $\tilde c_{\ell,b}= c_{\ell,b}+\nu$, and then output $(\ell,b)=\argmax_{\ell,b}(\tilde c_{\ell,b})$.
The sample complexity of this learner is $O_{\alpha,\beta,\epsilon,\delta}(d2^{d/2})$. 


\remove{
The resulting protocol is presented in Algorithm~\ref{algorithm:shuffleLearnParity}.

\begin{algorithm}
\caption{$M_4$: an agnostic parity learning algorithm \label{algorithm:shuffleLearnParity}}
Let $\varepsilon'=\frac{\gamma\varepsilon}{4\sqrt{2^{d}\ln(1/\delta^*)}}$, $\delta'=\beta/8k=2^{-(d+4)}\cdot\beta$. Let ShuffleCount be a $(\varepsilon',\delta')$ shuffle protocol that compute the sum of $\{0,1\}$ bits. \\
\SetAlgoLined
\LinesNumbered
\KwIn{$N=\frac{4((d+4)\ln2+\ln(1/\beta))}{\alpha^2\varepsilon'}=\frac{16((d+3)\ln2+\ln(1/\beta))\sqrt{2^{d+1}\ln(1/\delta^*)}}{\alpha^2\gamma\varepsilon}$ labeled examples $(x_i,y_i)$ where $x_i\in\{\pm1\}^d$ and $y_i\in\{\pm 1\}$.}

\For{$\ell\subseteq[d],b\in\{\pm1\}$}
{Apply {\mbox ShuffleCount} to obtain a noisy count $c_{\ell,b}$ of samples for which $b\cdot\prod_{j\in\ell} x_i[j]=y_i$.}

$(\hat{\ell},\hat{b})\leftarrow\argmax_{\ell,b}(\{c_{\ell,b}\}_{\ell\subseteq[d],b\in\{\pm1\}})$

\Return{$(\hat{\ell},\hat{b})$}
\end{algorithm}

\begin{proposition}[privacy]
$M_4$ is $(\varepsilon, \delta, \gamma)$-robust private, where $\delta=k\cdot\delta'+\delta^*=\beta/8+\delta^*$.
\end{proposition}
\begin{proof}
By the corollary of advanced composition, if $\gamma$ fraction of users are honest, $M_4$ is $(\varepsilon, \delta)$ differentially private.
\end{proof}

\begin{proposition}
Let $p_{\ell,b}$ represent    $\Pr_{x,y}[b\cdot \prod_{j\in\ell}x[j]=y]$, then,
$$\Pr\left[\left|p_{\ell,b}\cdot N-s_{\ell,b}\right|\leq \alpha N/4\right]\leq e^{-\frac{\alpha^2\cdot N}{36}}
$$
\end{proposition}
\begin{proof}
$s_{\ell,b}$ agrees with the distribution $\textbf{Bin}(N,p_{\ell,b})$, by chernoff bound,
$$
\Pr[s_{\ell,b}>(p_{\ell,b}+\alpha/4)\cdot N]=\Pr[s_{\ell,b}>(1+\alpha/4p_{\ell,b})\cdot p_{\ell,b}N]\leq e^{-\frac{\alpha^2\cdot N}{32p_{\ell,b}+4\alpha}}\leq e^{-\frac{\alpha^2\cdot N}{36}}
$$
$$
\Pr[s_{\ell,b}<(p_{\ell,b}-\alpha/4)\cdot N]=\Pr[s_{\ell,b}<(1-\alpha/4p_{\ell,b})\cdot p_{\ell,b}N]\leq e^{-\frac{\alpha^2\cdot N}{32p_{\ell,b}+4\alpha}}\leq e^{-\frac{\alpha^2\cdot N}{36}}
$$
\end{proof}

\begin{proposition}
$$
\Pr\left[\left|ShuffleCount(X)-s_{\ell,b}\right|\leq\frac{\alpha N}{4}\right]\geq1-\beta'
$$
where $\beta'=e^{-\frac{\alpha N\varepsilon'}{4}}+\delta'$.
\end{proposition}
\begin{proof}
Let $L=s_{\ell,b}+\textbf{Lap}(1/\epsilon')$, $S=[s_{\ell,b}-\alpha N/4,s_{\ell,b}
+\alpha N/4]$, by the property of laplace distribution,
$$
\Pr[L\in S]\geq1-e^{-\frac{\alpha N\varepsilon'}{4}}.
$$
Because the statistical distance of $ShuffleCount(X)$ and $L$ is less or equal to $\delta'$, we have
$$
\left|\Pr[ShuffleCound(X)\in S]-\Pr[L\in S]\right|\leq\delta'.
$$
Therefore,
$$
\Pr[ShuffleCound(X)\in S]\geq\Pr[L\in S]-\delta'\geq1-e^{-\frac{\alpha N\varepsilon'}{4}}-\delta'
$$
\end{proof}

\begin{proposition}
For $\beta<1/2$, $\varepsilon< 1$, $M_4$ is $(\alpha,\beta)$-agnostic learning.
\end{proposition}
\begin{proof}
It suffices to prove $(1-e^{-\frac{\alpha^2N}{48}})^k(1-\beta')^k\geq1-\beta$. We prove it by proving that $(1-\beta')^{2k}\geq1-\beta$ and $(1-e^{-\frac{\alpha^2N}{48}})^{2k}\geq1-\beta$. 

Note that
$
4k\beta'=2^{d+3}\cdot( e^{-\frac{\alpha N\varepsilon'}{4}}+\delta' )=2^{d+3}\cdot e^{-(\ln(\frac{2^{d+4}}{\beta}))/\alpha}+\beta/2\leq2^{d+3}\cdot e^{-\ln(\frac{2^{d+4}}{\beta})}+\beta/2=\beta
$.

$$
\begin{aligned}
(1-\beta')^{2k}
&\geq e^{-4\beta'k}\quad &\left(1-x\geq e^{-2x}\right)\\
&\geq e^{-\beta}\quad&\left(4k\beta'\leq\beta\right)\\
&\geq e^{-\ln(\frac{1}{1-\beta})}\quad &\left(\beta\leq\ln(\frac{1}{1-\beta})\right)\\
&=1-\beta.
\end{aligned}
$$

$$
\begin{aligned}
(1-e^{-\frac{\alpha^2N}{48}})^{2k}
&=(1-e^{-\frac{((d+3)\ln2+\ln(1/\beta))\sqrt{2^{d+1}\ln(1/\delta^*)}}{3\gamma\varepsilon}})^{2k}\\
&\geq (1-e^{-\frac{((d+3)\ln2+\ln(1/\beta))\sqrt{2^{d+1}\ln(1/\delta^*)}}{3}})^{2k}\\
&\geq (1-e^{-\frac{(\ln(1/\beta))\sqrt{2^{d+1}\ln(1/\delta^*)}}{3}})^{2k}\\
&=(1-\beta^{\frac{\sqrt{2^{d+1}\ln(1/\delta^*)}}{3}})^{2k}\\
&\geq 1-\beta
\end{aligned}
$$
To prove the last inequality, consider $f(x)=(1-x^a)^b$, where $a$ and $b$ are positive numbers and $b=O(poly(a))$. $f'(x)=b\cdot (1-x^a)^{b-1}\cdot (-ax^{a-1})>-abx^{a-1}$, when $x<1/2$, $f'(x)>>-1$. Let $g(x)=1-x$, because $g'(x)=-1$, $f(0)=g(0)$, so $f(x)\geq g(x)$, when $0<x<1/2$.
\end{proof}

*********************************************************

Let $s_{\ell,b}$ be the number of samples which agree with the parity function $c_{\ell, b}$. i.e., $s_{\ell,b} =\left|\{i: b\cdot \prod_{j\in\ell}x_i[j]=y_i\}\right|$, we set the parameters:\\
$k=2^{d+1}$\\
$N=\frac{1000\log\frac{4}{\delta'}\ln(1/\beta)}{\alpha^2\varepsilon'}=\frac{8000(d+2)\sqrt{2^d(d+1)}\cdot\ln(1/\beta)}{\alpha^2\varepsilon}$\\
$\delta^*=2^{-d-1}$\\
$\delta'=2^{-2d-2}$\\
$\varepsilon'=\frac{\varepsilon}{2\sqrt{2k\ln({1/\delta^*})}}=\frac{\varepsilon}{4\sqrt{2^{d+1}(d+1)}}$\\
$\delta=2^{-d}$\\
$\beta'=2^{-(\frac{\alpha^2N^2\varepsilon'^2}{14400\log{4/\delta'}}-1)}=2^{-(\frac{2500(d+2)^2\ln^2(1/\beta)}{36d\alpha^2}-1)}$

With high probability, the noise from the randomness of samples and shuffle counter are less than $\alpha N/4$, so we can prove that the error of shuffle parity learning algorithm is less than $\alpha$ with high probability.

\begin{theorem}\cite{CheuSUZZ19}
For every $\varepsilon\in(0,1)$ and $\delta\gtrsim2^{-\varepsilon n}$ and every function $f: \mathcal{X}\rightarrow\{0,1\}$, there exists a $(\varepsilon,\delta)$ differentially private shuffle protocol $P$, such that for every $n$ and every $X=(x_1,\ldots,x_n)\in\mathcal{X}^n$,
$$
\mathbb{E}\left[\left|P(X)-\sum_{i=1}^n{f(x_i)}\right|\right]\leq O\left(\frac{1}{\varepsilon}\sqrt{\log\frac{1}{\delta}}\right).
$$
\end{theorem}

\begin{theorem}\cite{CheuSUZZ19}
For every $n\in\mathbb{N}$, $\beta>0$, $2\log\frac{2}{\beta}\leq\lambda<n$ and $x\in\{0,1\}^n$,
$$
\Pr\left[\left|P_{n,\lambda}(x)-\sum_{i=1}^n x_i\right|>\sqrt{2\lambda\log{2/\beta}}\cdot\left(\frac{n}{n-\lambda}\right)\right]\leq\beta
$$
\end{theorem}

\begin{theorem}\cite{CheuSUZZ19}
For every $\delta>0$, $n\in\mathbb{N}$ and $\lambda\in[14\log\frac{4}{\delta},n]$, $P_{n,\lambda}$ is differentially private, where
$$
\varepsilon=\sqrt{\frac{32\log\frac{4}{\delta}}{\lambda-\sqrt{2\lambda\log\frac{2}{\delta}}}}\cdot\left(1-\frac{\lambda-\sqrt{2\lambda\log\frac{2}{\delta}}}{n}\right)
$$
\end{theorem}

\begin{theorem}\cite{CheuSUZZ19}\label{ShuffleCounter:accuracy}
For any $n\in\mathbb{N}$, $0<\delta<1$, $\frac{\sqrt{3456}}{{n}}\log\frac{4}{\delta}<\varepsilon<1$ and $\delta<\beta<1$, there exists a $\lambda$, such that $P_{n,\lambda}$ is $(\varepsilon,\delta)$ differentially private, and for every $X\in\{0,1\}^n$
$$
\Pr\left[\left|P_{n,\lambda}(X)-\sum_{i=1}^nx_i\right|\leq\frac{30}{\varepsilon}\sqrt{\log\frac{2}{\beta}\log\frac{4}{\delta}}\right]\geq1-\beta
$$
\end{theorem}

\begin{theorem}[Advanced composition]
For every $\varepsilon,\delta,\delta'\geq0$, if mechanism $M_i$ for $i\in[k]$ is $(\varepsilon, \delta)$-differentially private, then $M'=(M_1,M_2,\ldots,M_k)$ is $(\varepsilon',k\delta+\delta')$-differentially private, where
$$
\varepsilon'=\sqrt{2k\ln(1/\delta')}\cdot\varepsilon+k\varepsilon(e^\varepsilon-1).
$$
Specifically, to ensure $(\varepsilon',k\delta+\delta')$-differential privacy, it suffices to set 
$$
\varepsilon=\frac{\varepsilon'}{2\sqrt{2k\ln(1/\delta')}}.
$$
\end{theorem}

\begin{theorem}[Chernoff Bound]
Let $X$ be a random variable and $\mathbb{E}(X)=\mu$, for all $\delta>0$,
$$
\Pr[X>(1+\delta)\mu]\leq e^{\frac{-\delta^2\mu}{2+\delta}},
$$
$$
\Pr[X<(1-\delta)\mu]\leq e^{\frac{-\delta^2\mu}{2+\delta}}.
$$
\end{theorem}

\begin{proposition}[privacy]
$M_4$ is $(\varepsilon, \delta)$ differentially private.
\end{proposition}
\begin{proof}
By the corollary of advanced composition, $M_4$ is $(\varepsilon, \delta)$ differentially private.
\end{proof}

\begin{proposition}
Let $p_{\ell,b}$ represent    $\Pr_{x,y}[b\cdot \prod_{j\in\ell}x[j]=y]$, then,
$$\Pr\left[\left|p_{\ell,b}\cdot N-s_{\ell,b}\right|\leq \alpha N/4\right]\leq e^{-\frac{\alpha^2\cdot N}{36}}
$$
\end{proposition}
\begin{proof}
$s_{\ell,b}$ agrees with the distribution $\textbf{Bin}(N,p_{\ell,b})$, by chernoff bound,
$$
\Pr[s_{\ell,b}>(p_{\ell,b}+\alpha/4)\cdot N]=\Pr[s_{\ell,b}>(1+\alpha/4p_{\ell,b})\cdot p_{\ell,b}N]\leq e^{-\frac{\alpha^2\cdot N}{32p_{\ell,b}+4\alpha}}\leq e^{-\frac{\alpha^2\cdot N}{36}}
$$
$$
\Pr[s_{\ell,b}<(p_{\ell,b}-\alpha/4)\cdot N]=\Pr[s_{\ell,b}<(1-\alpha/4p_{\ell,b})\cdot p_{\ell,b}N]\leq e^{-\frac{\alpha^2\cdot N}{32p_{\ell,b}+4\alpha}}\leq e^{-\frac{\alpha^2\cdot N}{36}}
$$
\end{proof}

\begin{proposition}
$$
\Pr\left[\left|ShuffleCount(X)-s_{\ell,b}\right|\leq\frac{\alpha N}{4}\right]\geq1-\beta'
$$
where $\beta'=2^{-(\frac{\alpha^2N^2\varepsilon'^2}{14400\log{4/\delta'}}-1)}=2^{-(\frac{2500(d+2)^2\ln^2(1/\beta)}{36d\alpha^2}-1)}$.
\end{proposition}
\begin{proof}
It is directly from theorem \ref{ShuffleCounter:accuracy}.
\end{proof}

\begin{proposition}
For $\beta<1/2$, $M_4$ is $(\alpha,\beta)$-agnostic learning.
\end{proposition}
\begin{proof}
It suffices to prove $(1-e^{-\frac{\alpha^2N}{48}})^k(1-\beta')^k\geq1-\beta$. When $1-\beta'\leq 1-e^{-\alpha^2N/48}$.
$$
4k\beta'=2^{d+4-\frac{2500(d+2)^2\ln^2(1/\beta)}{36d}}\leq2^{-d\log(1/\beta)}=\beta^d\leq\beta
$$
$$
\ln(1/1-\beta)/4k\geq\beta/4k=\beta\cdot2^{-d+1}
$$
$$
\begin{aligned}
(1-e^{-\frac{\alpha^2N}{48}})^k(1-\beta')^k
&\geq (1-\beta')^{2k}\\
&\geq e^{-4\beta'k}\quad &\left(1-x\geq e^{-2x}\right)\\
&\geq e^{-\beta}\quad&\left(4k\beta'\leq\beta\right)\\
&\geq e^{-\ln(\frac{1}{1-\beta})}\quad &\left(\beta\leq\ln(\frac{1}{1-\beta})\right)\\
&=1-\beta
\end{aligned}
$$
When $1-\beta'\leq 1-e^{-\alpha^2N/48}$,
$$
\begin{aligned}
(1-e^{-\frac{\alpha^2N}{48}})^k(1-\beta')^k
&\geq (1-e^{-\frac{\alpha^2N}{48}})^{2k}\\
&=(1-e^{-\frac{500(d+2)\sqrt{2^d(d+1)}\cdot\ln(1/\beta)}{3\varepsilon}})^{2k}\\
&\geq (1-e^{-(d+2)\sqrt{2^d(d+1)}\cdot\ln(1/\beta)})^{2k}\\
&=(1-\beta^{(d+2)\sqrt{2^d(d+1)}})^{2k}\\
&\geq 1-\beta
\end{aligned}
$$
Consider $f(x)=(1-x^a)^b$, $f'(x)=b\cdot (1-x^a)^{b-1}\cdot (-ax^{a-1})>-abx^{a-1}$, when $x<1/2$, $f'(x)>>-1$, let $g(x)=1-x$, because $g'(x)=-1$, $f(0)=g(0)$, so $f(x)\geq g(x)$, when $0<x<1/2$.
\end{proof}

} 
\bibliographystyle{plain}
\bibliography{shuffleparity}

\begin{thebibliography}{10}

\bibitem{BC20}
Victor Balcer and Albert Cheu.
\newblock Separating local {\&} shuffled differential privacy via histograms.
\newblock In Yael~Tauman Kalai, Adam~D. Smith, and Daniel Wichs, editors, {\em
  1st Conference on Information-Theoretic Cryptography, {ITC} 2020, June 17-19,
  2020, Boston, MA, {USA}}, volume 163 of {\em LIPIcs}, pages 1:1--1:14.
  Schloss Dagstuhl - Leibniz-Zentrum f{\"{u}}r Informatik, 2020.

\bibitem{BalcerCJM20}
Victor Balcer, Albert Cheu, Matthew Joseph, and Jieming Mao.
\newblock Connecting robust shuffle privacy and pan-privacy.
\newblock {\em CoRR}, abs/2004.09481, 2020.

\bibitem{BalleBGN19a}
Borja Balle, James Bell, Adri{\`{a}} Gasc{\'{o}}n, and Kobbi Nissim.
\newblock Differentially private summation with multi-message shuffling.
\newblock {\em CoRR}, abs/1906.09116, 2019.

\bibitem{BalleBGN19}
Borja Balle, James Bell, Adri{\`{a}} Gasc{\'{o}}n, and Kobbi Nissim.
\newblock The privacy blanket of the shuffle model.
\newblock In Alexandra Boldyreva and Daniele Micciancio, editors, {\em Advances
  in Cryptology - {CRYPTO} 2019 - 39th Annual International Cryptology
  Conference, Santa Barbara, CA, USA, August 18-22, 2019, Proceedings, Part
  {II}}, volume 11693 of {\em Lecture Notes in Computer Science}, pages
  638--667. Springer, 2019.

\bibitem{BalleBGN20}
Borja Balle, James Bell, Adri{\`{a}} Gasc{\'{o}}n, and Kobbi Nissim.
\newblock Private summation in the multi-message shuffle model.
\newblock {\em CoRR}, abs/2002.00817, 2020.

\bibitem{BeimelNO08}
Amos Beimel, Kobbi Nissim, and Eran Omri.
\newblock Distributed private data analysis: Simultaneously solving how and
  what.
\newblock In David~A. Wagner, editor, {\em Advances in Cryptology - {CRYPTO}
  2008}, volume 5157 of {\em Lecture Notes in Computer Science}, pages
  451--468. Springer, 2008.

\bibitem{BittauEMMRLRKTS17}
Andrea Bittau, {\'{U}}lfar Erlingsson, Petros Maniatis, Ilya Mironov, Ananth
  Raghunathan, David Lie, Mitch Rudominer, Ushasree Kode, Julien Tinn{\'{e}}s,
  and Bernhard Seefeld.
\newblock Prochlo: Strong privacy for analytics in the crowd.
\newblock In {\em Proceedings of the 26th Symposium on Operating Systems
  Principles}, pages 441--459. {ACM}, 2017.

\bibitem{ChanSS12}
T.{-}H.~Hubert Chan, Elaine Shi, and Dawn Song.
\newblock Optimal lower bound for differentially private multi-party
  aggregation.
\newblock In Leah Epstein and Paolo Ferragina, editors, {\em Algorithms - {ESA}
  2012 - 20th Annual European Symposium, Ljubljana, Slovenia, September 10-12,
  2012. Proceedings}, volume 7501 of {\em Lecture Notes in Computer Science},
  pages 277--288. Springer, 2012.

\bibitem{ChenG0M21}
Lijie Chen, Badih Ghazi, Ravi Kumar, and Pasin Manurangsi.
\newblock On distributed differential privacy and counting distinct elements.
\newblock In James~R. Lee, editor, {\em 12th Innovations in Theoretical
  Computer Science Conference, {ITCS} 2021, January 6-8, 2021, Virtual
  Conference}, volume 185 of {\em LIPIcs}, pages 56:1--56:18. Schloss Dagstuhl
  - Leibniz-Zentrum f{\"{u}}r Informatik, 2021.

\bibitem{CheuSUZZ19}
Albert Cheu, Adam~D. Smith, Jonathan Ullman, David Zeber, and Maxim Zhilyaev.
\newblock Distributed differential privacy via shuffling.
\newblock In Yuval Ishai and Vincent Rijmen, editors, {\em Advances in
  Cryptology - {EUROCRYPT} 2019}, volume 11476 of {\em Lecture Notes in
  Computer Science}, pages 375--403. Springer, 2019.

\bibitem{CheuUllman20}
Albert Cheu and Jonathan~R. Ullman.
\newblock The limits of pan privacy and shuffle privacy for learning and
  estimation.
\newblock {\em CoRR}, abs/2009.08000, 2020.

\bibitem{DuchiJW13}
John~C. Duchi, Michael~I. Jordan, and Martin~J. Wainwright.
\newblock Local privacy and statistical minimax rates.
\newblock In {\em 54th Annual {IEEE} Symposium on Foundations of Computer
  Science, {FOCS} 2013, 26-29 October, 2013, Berkeley, CA, {USA}}, pages
  429--438. {IEEE} Computer Society, 2013.

\bibitem{DMNS06}
Cynthia Dwork, Frank McSherry, Kobbi Nissim, and Adam Smith.
\newblock Calibrating noise to sensitivity in private data analysis.
\newblock In {\em TCC}, volume 3876 of {\em Lecture Notes in Computer Science},
  pages 265--284. Springer, 2006.

\bibitem{DworkNPRY10}
Cynthia Dwork, Moni Naor, Toniann Pitassi, Guy~N. Rothblum, and Sergey
  Yekhanin.
\newblock Pan-private streaming algorithms.
\newblock In Andrew~Chi{-}Chih Yao, editor, {\em Innovations in Computer
  Science - {ICS} 2010, Tsinghua University, Beijing, China, January 5-7, 2010.
  Proceedings}, pages 66--80. Tsinghua University Press, 2010.

\bibitem{ErlingssonFMRTT19}
{\'{U}}lfar Erlingsson, Vitaly Feldman, Ilya Mironov, Ananth Raghunathan, Kunal
  Talwar, and Abhradeep Thakurta.
\newblock Amplification by shuffling: From local to central differential
  privacy via anonymity.
\newblock In Timothy~M. Chan, editor, {\em Proceedings of the Thirtieth Annual
  {ACM-SIAM} Symposium on Discrete Algorithms, {SODA} 2019}, pages 2468--2479.
  {SIAM}, 2019.

\bibitem{GhaziGKPV19}
Badih Ghazi, Noah Golowich, Ravi Kumar, Rasmus Pagh, and Ameya Velingker.
\newblock On the power of multiple anonymous messages.
\newblock {\em {IACR} Cryptol. ePrint Arch.}, 2019:1382, 2019.

\bibitem{GhaziMPV20}
Badih Ghazi, Pasin Manurangsi, Rasmus Pagh, and Ameya Velingker.
\newblock Private aggregation from fewer anonymous messages.
\newblock In Anne Canteaut and Yuval Ishai, editors, {\em Advances in
  Cryptology - {EUROCRYPT} 2020 - 39th Annual International Conference on the
  Theory and Applications of Cryptographic Techniques, Zagreb, Croatia, May
  10-14, 2020, Proceedings, Part {II}}, volume 12106 of {\em Lecture Notes in
  Computer Science}, pages 798--827. Springer, 2020.

\bibitem{GPVS19}
Badih Ghazi, Rasmus Pagh, and Ameya Velingker.
\newblock Scalable and differentially private distributed aggregation in the
  shuffled model.
\newblock {\em CoRR}, abs/1906.08320, 2019.

\bibitem{KLNRS08}
Shiva~Prasad Kasiviswanathan, Homin~K. Lee, Kobbi Nissim, Sofya Raskhodnikova,
  and Adam Smith.
\newblock What can we learn privately?
\newblock {\em SIAM J. Comput.}, 40(3):793--826, 2011.

\bibitem{Kearns93}
Michael~J. Kearns.
\newblock Efficient noise-tolerant learning from statistical queries.
\newblock In S.~Rao Kosaraju, David~S. Johnson, and Alok Aggarwal, editors,
  {\em Proceedings of the Twenty-Fifth Annual {ACM} Symposium on Theory of
  Computing, May 16-18, 1993, San Diego, CA, {USA}}, pages 392--401. {ACM},
  1993.

\bibitem{Valiant84}
L.~G. Valiant.
\newblock A theory of the learnable.
\newblock {\em Commun. ACM}, 27(11):1134--1142, November 1984.

\end{thebibliography}

\end{document}